\title{Algorithmic Robustness for Learning via $(\epsilon, \gamma, \tau)$-Good Similarity Functions}
\author{
Maria-Irina Nicolae, Marc Sebban \& Amaury Habrard \\
Hubert Curien Laboratory\\
Jean Monnet University, Saint-Etienne, France \\
\texttt{\{Maria.Irina.Nicolae,Marc.Sebban,Amaury.Habrard\}@univ-st-etienne.fr} \\
\AND
Éric Gaussier \& Massih-Reza Amini \\
Laboratoire d'Informatique de Grenoble \\
Joseph Fourier University, Grenoble, France \\
\texttt{\{Eric.Gaussier,Massih-Reza.Amini\}@imag.fr}
}
\newtheorem{theorem}{Theorem}
\newtheorem{definition}{Definition}
\newtheorem{example}{Similarity function}
\begin{document}
\maketitle

% \keywords{semi-supervised learning, similarity function, algorithmic robustness}

\begin{abstract}
The notion of metric plays a key role in machine learning problems such as classification, clustering or ranking. 
However, it is worth noting that there is a severe lack of  theoretical guarantees that can be expected on the generalization capacity of the classifier associated to a given metric.
The theoretical framework of $(\epsilon, \gamma, \tau)$-good similarity functions \citep{conf/colt/BalcanBS08} has been one of the first attempts to draw a link between the properties of a similarity function and those of a linear classifier making use of it.
In this paper, we extend and complete this theory by providing a new generalization bound for the associated classifier based on the algorithmic robustness framework.
\end{abstract}

\section{Introduction}

Most of the machine learning algorithms make use of metrics for comparing objects and making decisions (e.g. SVMs, k-NN, k-means, etc.). However, it is worth noticing that the theoretical guarantees of these algorithms are always derived independently from the peculiarities of the metric they make use of. For example, in supervised learning, the generalization bounds on the classification error do not take into account the discriminative properties of the metrics. In this context, \cite{conf/colt/BalcanBS08} filled this gap by proposing the first framework that allows one to relate similarities with a classification algorithm. This general framework, that can be used with any bounded similarity function, provides generalization guarantees on a linear classifier learned from the similarity. Moreover, their algorithm, whose formulation is equivalent to a relaxed $L_1$-norm SVM~\citep{zhu20041}, does not enforce the positive definiteness constraint of the similarity.
In this paper, we show that using Balcan et al's setting and the algorithmic robustness framework \citep{DBLP:journals/ml/XuM12}, we can derive  generalization guarantees which  consider other properties of the similarity. This leads to new consistency bounds for different kinds of similarity functions.

\section{Notations and Related Work}\label{sec:notations}
%We denote vectors by lower-case bold symbols ($\mathbf{x}$) and matrices by upper-case bold symbols ($\mathbf{A}$).
Let us assume we are given access to labeled examples $\mathbf{z}=(\mathbf{x}, l(\mathbf{x}))$ drawn from some unknown distribution $P$ over $\mathcal{X}\times\mathcal{Y}$, where  $\mathcal{X}\subseteq\mathbb{R}^d$ and  $\mathcal{Y}=\{-1,1\}$ are respectively the instance and the output spaces.
A pairwise similarity function $K_{\mathbf{A}}$ over $\mathcal{X}$, possibly parameterized by a matrix $\mathbf{A}\in\mathbb{R}^{d\times d}$, is defined as $K_{\mathbf{A}}:\mathcal{X}\times \mathcal{X}\rightarrow [-1,1]$, and the hinge loss as $[c]_+=\max(0, 1-c)$.
We denote the $L_1$ norm by $||\cdot||_1$, the $L_2$ norm by $||\cdot||_2$ and the Frobenius norm by $||\cdot||_{\mathcal{F}}$. We assume that $||\mathbf{x}||_2 \leq 1$.

\cite{conf/colt/BalcanBS08} introduced a theory for learning with so called $(\epsilon, \gamma,\tau)$-good similarity functions. %This was the first stone to establish generalization guarantees for a linear classifier that would be learned by making use of such similarities. 
Their generalization guarantees are based on the following definition.

\begin{definition}\citep{conf/colt/BalcanBS08}\label{def:hinge}
$K_{\mathbf{A}}$ is a $(\epsilon, \gamma,\tau)$-good similarity function in hinge loss for a learning problem P if there exists a random indicator function $R(\mathbf{x})$ defining a probabilistic set of "reasonable points" such that the following conditions hold:
\begin{enumerate}
	\item $\mathbb{E}_{(\mathbf{x}, l(\mathbf{x}))\sim P}\left[\left[1-l(x)g(\mathbf{x})/\gamma\right]_+\right]\leq \epsilon,$\\
	where $g(\mathbf{x}) = \mathbb{E}_{(\mathbf{x}',l(\mathbf{x}'),R(\mathbf{x}'))}\left[l(\mathbf{x}')K_{\mathbf{A}}(\mathbf{x}, \mathbf{x}')|R(\mathbf{x}')\right]$.
	\item $\Pr_{\mathbf{x}'}(R(\mathbf{x}')) \geq \tau$.
\end{enumerate}
\end{definition}

This definition imposes a constraint on the mass of reasonable points one must consider (greater than $\tau$). It also expresses the tolerated margin violations in an averaged way: a $(1-\epsilon)$ proportion of examples $\mathbf{x}$ are on average $2\gamma$ more similar to random reasonable examples $\mathbf{x}'$ of their own label than to random reasonable examples $\mathbf{x}'$ of the other label. %This allows for more flexibility than pair- or triplet-based constraints. Notice that no constraint is imposed on the form of the similarity function. 
Definition~\ref{def:hinge} can then be used to learn well:

\begin{theorem}\citep{conf/colt/BalcanBS08}\label{theo:hinge}
Let $K_{\mathbf{A}}$ be an $(\epsilon, \gamma, \tau)$-good similarity function in hinge loss for a learning problem P.
For any $\epsilon_1>0$ and $<\delta<\gamma\epsilon_1/4$ let $\mathcal{S}=\{\mathbf{x}'_1, \mathbf{x}'_2,\ldots,\mathbf{x}'_{d_u}\}$ be a sample of $d_u=\frac{2}{\tau}\left ( log(2/\delta)+16\frac{log(2/\delta)}{(\epsilon_1\gamma)^2} \right )$ landmarks drawn from P.
Consider the mapping $\phi^\mathcal{S}:\mathcal{X}\rightarrow \mathbb{R}^{d_u}$ defined as follows: $\phi^\mathcal{S}_i(\mathbf{x})=K_{\mathbf{A}}(\mathbf{x}, \mathbf{x}'_i), i\in \{1,\ldots, d_u\}$.
With probability $1-\delta$ over the random sample $\mathcal{S}$, the induced distribution $\phi^\mathcal{S}(P)$ in $\mathbb{R}^{d_u}$, has a separator achieving hinge loss at most $\epsilon+\epsilon_1$ at margin $\gamma$.
\end{theorem}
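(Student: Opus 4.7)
The plan is to exhibit an explicit weight vector $w^\star\in\mathbb{R}^{d_u}$ in the landmark-mapped space and show, with probability at least $1-\delta$ over $\mathcal{S}$, that it achieves expected hinge loss at most $\epsilon+\epsilon_1$ at margin $\gamma$. First I would set, for $i=1,\dots,d_u$, the \emph{oracle} weights
\[
w^\star_i \;=\; \frac{l(\mathbf{x}'_i)\,R(\mathbf{x}'_i)}{\tau\,d_u}.
\]
Since $R$ is used only in the existence argument (not in the learner), this is legitimate. With this choice $w^\star\cdot\phi^{\mathcal{S}}(\mathbf{x})$ is an average of $d_u$ independent summands, each in $[-1/(\tau d_u),1/(\tau d_u)]$, and its expectation over $\mathcal{S}$ equals $(\Pr(R)/\tau)\,g(\mathbf{x})$, which is essentially $g(\mathbf{x})$ by condition 2 of Definition~\ref{def:hinge}. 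Note also that $\|w^\star\|_1\le 1/\tau$, so $w^\star$ is a well-defined bounded linear separator in the induced space.

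The second step is pointwise concentration. For a fixed $\mathbf{x}$, Hoeffding's inequality yields
\[
\Pr_{\mathcal{S}}\!\bigl[\,|w^\star\cdot\phi^{\mathcal{S}}(\mathbf{x})-g(\mathbf{x})|>\gamma\epsilon_1/2\,\bigr]\;\le\;2\exp\!\bigl(-d_u\tau^2(\gamma\epsilon_1)^2/8\bigr),
\]
and the term $16\log(2/\delta)/(\epsilon_1\gamma)^2$ inside the expression for $d_u$ is exactly what is needed to make this probability small. In parallel, a Chernoff bound on the number of landmarks $\mathbf{x}'_i$ satisfying $R(\mathbf{x}'_i)=1$ — whose expectation is at least $\tau d_u$ — guarantees that with probability at least $1-\delta/2$ this count is close to its mean; this is the role of the first term $2\log(2/\delta)/\tau$ in $d_u$.

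Third, I would lift the pointwise concentration to a bound on the expected hinge loss via a Fubini/Markov-type argument, as in Balcan et al. Averaging the Hoeffding bound over $\mathbf{x}\sim P$ and swapping the order of expectation, we get that with probability at least $1-\delta/2$ over $\mathcal{S}$, at most an $O(\epsilon_1)$-fraction of test points $\mathbf{x}$ suffer deviation larger than $\gamma\epsilon_1/2$. For the "good" points, since the hinge loss $c\mapsto[1-c/\gamma]_+$ is $1/\gamma$-Lipschitz, a deviation of $\gamma\epsilon_1/2$ in the score translates to at most $\epsilon_1/2$ extra loss relative to $g$; combined with Definition~\ref{def:hinge}(1), this contributes $\epsilon+\epsilon_1/2$. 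For the remaining "bad" points, the hinge loss is bounded (using $\|w^\star\|_1\le 1/\tau$ and $|K_{\mathbf{A}}|\le 1$) by $1+1/(\tau\gamma)$, so their total contribution is absorbed into the remaining $\epsilon_1/2$ by the condition $\delta<\gamma\epsilon_1/4$. A final union bound over the concentration and Chernoff events yields the $1-\delta$ probability statement.

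The main obstacle is calibrating the constants: the error budget $\epsilon_1$ must be split across (i) the Lipschitz error on good points, (ii) the loss on bad points, and (iii) the multiplicative slack from $\Pr(R)/\tau\neq 1$, so that all three fit under $\epsilon_1$ while the sample size $d_u$ matches the stated expression. The hypothesis $\delta<\gamma\epsilon_1/4$ is precisely the technical condition that makes this balancing possible. A secondary, purely bookkeeping, difficulty is controlling the hinge loss on low-probability "bad" events, which requires the $1/\tau$ bound on $\|w^\star\|_1$ noted above.
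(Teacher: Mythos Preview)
The paper does not give its own proof of this theorem: it is quoted verbatim from \cite{conf/colt/BalcanBS08} and used only as background, so there is nothing in the paper to compare your argument to. Your outline is essentially the original Balcan--Blum--Srebro strategy (oracle weights built from the labels and the indicator $R$, Hoeffding pointwise, then a Fubini/Markov lift to expected hinge loss), so at the level of ideas you are on the right track.

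There is, however, a genuine quantitative gap in your normalization. With $w^\star_i=l(\mathbf{x}'_i)R(\mathbf{x}'_i)/(\tau d_u)$ each summand has range $2/(\tau d_u)$, so Hoeffding gives
\[
\Pr_{\mathcal S}\bigl[\,|w^\star\!\cdot\phi^{\mathcal S}(\mathbf{x})-\mathbb{E}[\cdot]|>\gamma\epsilon_1/2\,\bigr]\le 2\exp\!\bigl(-d_u\tau^{2}(\gamma\epsilon_1)^2/8\bigr),
\]
exactly as you wrote. But the stated sample size has $d_u$ of order $1/\tau$, not $1/\tau^{2}$: plugging in the second term of $d_u$ yields an exponent of order $-\tau\log(2/\delta)$, which is \emph{not} small when $\tau$ is small. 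So the sentence ``the term $16\log(2/\delta)/(\epsilon_1\gamma)^2$ inside $d_u$ is exactly what is needed'' is false for your choice of $w^\star$. The original proof avoids this by using the Chernoff step \emph{first} rather than ``in parallel'': with probability $1-\delta/2$ the sample contains at least $d=16\log(2/\delta)/(\epsilon_1\gamma)^2$ reasonable landmarks, and one then sets $w_i=l(\mathbf{x}'_i)/d$ on those $d$ points (and $0$ elsewhere). This makes the summand range $2/d$ with $d$ terms, so Hoeffding gives $2\exp(-d(\gamma\epsilon_1)^2/8)$ with no $\tau$ loss, and simultaneously makes the conditional mean exactly $g(\mathbf{x})$, eliminating the $\Pr(R)/\tau$ multiplicative slack you flagged as obstacle (iii). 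With that correction the rest of your plan (Lipschitzness of the hinge, bounding the loss on the bad set via $\|w\|_1$, union bound) goes through.
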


In other words, if $K_{\mathbf{A}}$ is $(\epsilon, \gamma, \tau)$-good according to Definition~\ref{def:hinge} and enough points are available, there exists a linear separator $\boldsymbol{\alpha} \in \mathbb{R}^{d_u}$ with error arbitrarily close to $\epsilon$ in the space $\phi^\mathcal{S}$.
This separator can be learned from $d_l$ labeled examples by solving the following optimization problem:
%The procedure for finding the separator involves two steps: first using $d_u$ {\bf potentially unlabeled examples} as landmarks  to construct the feature space, then using a new labeled set of size $d_l$ to estimate $\boldsymbol{\alpha}\in\mathbb{R}^{d_u}$.
%This is done by solving the following optimization problem:

\begin{eqnarray}
\min\frac{1}{d_l}\sum_{i=1}^{d_l} \ell(\mathbf{A},\boldsymbol{\alpha},\mathbf{z}_i) \hspace{1cm} \mbox{ s.t. } \sum_{j=1}^{d_u} |\alpha_j|\leq 1/\gamma \label{eq:linear-problem}
\end{eqnarray}
where
$\ell(\mathbf{A},\boldsymbol{\alpha},\mathbf{z}_i=(\mathbf{x}_i,l(\mathbf{x}_i))) = \left [1-\sum_{j=1}^{d_u}\boldsymbol{\alpha}_jl(\mathbf{x}_i)K_{\mathbf{A}}(\mathbf{x}_i,\mathbf{x}_j) \right ]_+$ is the instantaneous loss estimated at point $(\mathbf{x}_i,l(\mathbf{x}_i))$. Therefore, this optimization problem reduces to minimizing the empirical loss ${\cal \hat{R}}^{\ell}=\frac{1}{d_l}\sum_{i=1}^{d_l}\ell(\mathbf{A},\boldsymbol{\alpha},\mathbf{z}_i)$ over the training set ${\cal S}$.
%\begin{eqnarray}
%\min_{\boldsymbol{\alpha}} \left\{ \sum_{i=1}^{d_l} \left[1-\sum_{j=1}^{d_u} \alpha_j l(\mathbf{x}_i) K_{\mathbf{A}}(\mathbf{x}_i, \mathbf{x}_j)\right]_+  : \sum_{j=1}^{d_u} |\alpha_j|\leq 1/\gamma \right\}. \label{eq:linear-problem}
%\end{eqnarray}
Note that this problem can be solved efficiently by linear programming.
Also, as the problem is $L_1$-constrained, tuning the value of $\gamma$ will produce a sparse solution.\\

\section{Consistency Guarantees}
In this section, we provide a new generalization bound for the classifier learned in Problem (\ref{eq:linear-problem}) based on the recent algorithmic robustness framework proposed by \cite{DBLP:journals/ml/XuM12}. To begin with, let us recall the notion of robustness of an algorithm ${\cal A}$.

\begin{definition}[Algorithmic Robustness \citep{DBLP:journals/ml/XuM12}]
Algorithm $\mathcal{A}$ is $(M,\epsilon(\cdot))$-robust, for $M\in\mathbb{N}$ and $\epsilon(\cdot):\mathcal{Z}^{d_l}\to\mathbb{R}$, if $\mathcal{Z}$ can be partitioned into $M$ disjoint sets, denoted by $\{C_i\}_{i=1}^M$, such that the following holds for all $\mathcal{S}\in\mathcal{Z}^{d_l}$:
\begin{align*}
\forall \mathbf{z}=(\mathbf{x},l(\mathbf{x}))\in \mathcal{S}, \forall \mathbf{z}'=(\mathbf{x}',l(\mathbf{x}'))\in\mathcal{Z},\forall i\in[M]: \\
\text{if } \mathbf{z},\mathbf{z}'\in C_i, \text{then } |\ell(\mathbf{A},\boldsymbol{\alpha},\mathbf{z})-\ell(\mathbf{A},\boldsymbol{\alpha},\mathbf{z}')|\leq \epsilon(\mathcal{S}).
\end{align*}
\end{definition}

%This notion is a desired property of a learning algorithm, as it implies a lack of sensitivity to small perturbations in the training data in the same sense as robust optimization.
Roughly speaking, %an algorithm is robust if for any example $\mathbf{z}'$ falling in the same subset as a training example $\mathbf{z}$, the gap between the losses associated with $\mathbf{z}$ and $\mathbf{z}'$ is bounded.
%In other words, 
robustness characterizes the capability of an algorithm to perform similarly on close train and test instances. The closeness of the instances is based on a partitionning of $\mathcal{Z}$: two examples are close if they belong to the same region. 
In general, the partition is based on the notion of covering number \citep{Kolmogorov-Tikhomirov61} allowing one to cover $\mathcal{Z}$ by regions where the distance/norm between two elements in the same region are no more than a fixed quantity $\rho$ (see \cite{DBLP:journals/ml/XuM12} for details about how the convering is built).
%. The covering is built as follows: first we consider a $\rho$-cover over the instance $\mathcal{X}$, then we partition $\mathcal{Z}$ by considering one $\rho$-cover over $\mathcal{X}$ for the positive instances and another $\rho$-cover over $\mathcal{X}$ for the negative instances ensuring that two examples in the same region belong to the same class and the distance between them is no more than $\rho$ (\cite{DBLP:conf/colt/XuM10,DBLP:journals/ml/XuM12} for details). Note that with this construction the \text{$\rho$}-covers verify the cluster assumption used in semi-supervised learning \citep{ChaSchZie06}.
Now we can state the first theoretical contribution of this paper. 

\begin{theorem} \label{theo1}
Given a partition of ${\cal Z}$ into $M$ subsets $\{C_i\}$ such that $\mathbf{z}=(\mathbf{x},l(\mathbf{x}))$ and $ \mathbf{z}'=(\mathbf{x}',l(\mathbf{x}')) \in C_i$ and $l(\mathbf{x})=l(\mathbf{x}')$, and provided that $K_{\mathbf{A}}(\mathbf{x},\mathbf{x}')$ is $l$-lipschitz w.r.t.\ its first argument, the optimization problem~\eqref{eq:linear-problem} is $(M,\epsilon(\mathcal{S}))$-robust with $\epsilon(\mathcal{S})=\frac{1}{\gamma}l\rho$, where $\rho=\sup_{\mathbf{x},\mathbf{x}' \in C_i}||\mathbf{x}-\mathbf{x}'||$.
\end{theorem}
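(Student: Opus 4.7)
The plan is to show directly that the loss $\ell(\mathbf{A},\boldsymbol{\alpha},\cdot)$ has a small variation on any cell $C_i$ of the partition; once this is done, the robustness constant $\epsilon(\mathcal{S})=l\rho/\gamma$ falls out immediately from the $L_1$ constraint in~\eqref{eq:linear-problem}.

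Concretely, I would fix any $i\in[M]$ and any $\mathbf{z}=(\mathbf{x},l(\mathbf{x}))$, $\mathbf{z}'=(\mathbf{x}',l(\mathbf{x}'))$ lying in $C_i$, and start from
\begin{align*}
|\ell(\mathbf{A},\boldsymbol{\alpha},\mathbf{z})-\ell(\mathbf{A},\boldsymbol{\alpha},\mathbf{z}')|
&= \Bigl|\bigl[1-\textstyle\sum_j\alpha_j l(\mathbf{x})K_{\mathbf{A}}(\mathbf{x},\mathbf{x}'_j)\bigr]_+ - \bigl[1-\textstyle\sum_j\alpha_j l(\mathbf{x}')K_{\mathbf{A}}(\mathbf{x}',\mathbf{x}'_j)\bigr]_+\Bigr|.
\end{align*}
The first step is to apply the $1$-Lipschitz property of the hinge function $[\cdot]_+$ to peel off the outer loss, yielding the bound $\bigl|\sum_j\alpha_j\bigl(l(\mathbf{x})K_{\mathbf{A}}(\mathbf{x},\mathbf{x}'_j)-l(\mathbf{x}')K_{\mathbf{A}}(\mathbf{x}',\mathbf{x}'_j)\bigr)\bigr|$.

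The second step is where the hypothesis built into the partition is crucial: because $\mathbf{z},\mathbf{z}'\in C_i$ we have $l(\mathbf{x})=l(\mathbf{x}')$, so the label factor $|l(\mathbf{x})|=1$ can be pulled out and the expression collapses to $\bigl|\sum_j\alpha_j\bigl(K_{\mathbf{A}}(\mathbf{x},\mathbf{x}'_j)-K_{\mathbf{A}}(\mathbf{x}',\mathbf{x}'_j)\bigr)\bigr|$. Without the same-label assumption, one would be stuck with a term of the form $K_{\mathbf{A}}(\mathbf{x},\mathbf{x}'_j)+K_{\mathbf{A}}(\mathbf{x}',\mathbf{x}'_j)$ that cannot be controlled by Lipschitzness; this is the only subtle point in the argument and is precisely why the partition is refined to respect labels.

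Finally, I would apply the triangle inequality, then the $l$-Lipschitz assumption on $K_{\mathbf{A}}$ in its first argument to get $|K_{\mathbf{A}}(\mathbf{x},\mathbf{x}'_j)-K_{\mathbf{A}}(\mathbf{x}',\mathbf{x}'_j)|\leq l\,\|\mathbf{x}-\mathbf{x}'\|\leq l\rho$ (uniformly in $j$), and close the chain with the feasibility constraint $\sum_j|\alpha_j|\leq 1/\gamma$ from~\eqref{eq:linear-problem}:
\begin{equation*}
|\ell(\mathbf{A},\boldsymbol{\alpha},\mathbf{z})-\ell(\mathbf{A},\boldsymbol{\alpha},\mathbf{z}')|
\;\leq\; \sum_{j=1}^{d_u}|\alpha_j|\cdot l\rho
\;\leq\; \frac{l\rho}{\gamma}.
\end{equation*}
Since the same bound holds for every cell $C_i$ and every pair of points inside it, this yields $(M,l\rho/\gamma)$-robustness, which is the claim. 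The only real "work" is identifying why the same-label condition is needed; the rest is a straightforward three-line chain of Lipschitz estimates and the $L_1$ budget constraint.
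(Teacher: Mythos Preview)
Your proposal is correct and follows essentially the same approach as the paper: peel off the hinge via its $1$-Lipschitzness, use $l(\mathbf{x})=l(\mathbf{x}')$ to reduce to a difference of similarities, apply the triangle inequality and the $l$-Lipschitz hypothesis on $K_{\mathbf{A}}$, then close with the $L_1$ constraint $\sum_j|\alpha_j|\leq 1/\gamma$. If anything, you are more explicit than the paper about the role of the same-label assumption, which the paper uses silently when passing from the label-weighted sum to $\sum_j|\alpha_j|\,|K_{\mathbf{A}}(\mathbf{x}',\mathbf{x}_j)-K_{\mathbf{A}}(\mathbf{x},\mathbf{x}_j)|$.
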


\begin{proof}[Proof]
%\begin{small}
\begin{align}
\left| \ell(\mathbf{A},\boldsymbol{\alpha},\mathbf{z})-\ell(\mathbf{A},\boldsymbol{\alpha},\mathbf{z}') \right| \leq & \left|\sum_{j=1}^{d_u}\alpha_jl(\mathbf{x}')K_{\mathbf{A}}(\mathbf{x}',\mathbf{x}_j)-\sum_{j=1}^{d_u}\alpha_jl(\mathbf{x})K_{\mathbf{A}}(\mathbf{x},\mathbf{x}_j)\right| \label{eq:1}\\
%= & \left|\sum_{j=1}^{d_u}\alpha_j(K_{\mathbf{A}}(\mathbf{x}',\mathbf{x}_j)-K_{\mathbf{A}}(\mathbf{x},\mathbf{x}_j))\right| \nonumber \\
%\leq & \sum_{j=1}^{d_u} ||\alpha_j||_2\cdot||K_{\mathbf{A}}(\mathbf{x}',\mathbf{x}_j)-K_{\mathbf{A}}(\mathbf{x},\mathbf{x}_j)||_2\label{eq:2}\\
%\leq & \sum_{j=1}^{d_u} |\alpha_j|\cdot||K_{\mathbf{A}}(\mathbf{x}',\mathbf{x}_j)-K_{\mathbf{A}}(\mathbf{x},\mathbf{x}_j)||_1\label{eq:3}\\
\leq & \sum_{j=1}^{d_u} |\alpha_j|\cdot\left|K_{\mathbf{A}}(\mathbf{x}',\mathbf{x}_j)-K_{\mathbf{A}}(\mathbf{x},\mathbf{x}_j)\right| \label{eq:3} \\
\leq & \sum_{j=1}^{d_u} |\alpha_j|\cdot l||\mathbf{x}-\mathbf{x}'||
\leq \frac{1}{\gamma}l\rho \label{eq:4}
%\leq & ||\boldsymbol{\alpha}||_1.||K_{\mathbf{A}}(\mathbf{x}',\mathbf{x}_j)-K_{\mathbf{A}}(\mathbf{x},\mathbf{x}_j))||_1\label{eq:3}\\
%\leq & \frac{1}{\gamma}\sum_{j=1}^{d_u}|K_{\mathbf{A}}(\mathbf{x},\mathbf{x}_j)-K_{\mathbf{A}}(\mathbf{x}',\mathbf{x}_j)|\label{eq:4}\\
%\leq & \frac{1}{\gamma}\frac{1}{d_u}d_ul||\mathbf{x}-\mathbf{x}'|| \label{eq:5}\\
\end{align}
%\end{small}

Setting $\rho=\sup_{\mathbf{x},\mathbf{x}' \in C_i}||\mathbf{x}-\mathbf{x}'||_1$, we get the Theorem.
We get Inequality~\eqref{eq:1} from the 1-lipschitzness of the hinge loss; 
%Inequalities~\ref{eq:2} and~\ref{eq:3} come from the Cauchy-Schwarz inequality and some classical norm properties; 
Inequality \eqref{eq:3} comes from the classical triangle inequality;
The first inequality on line~\eqref{eq:4} is due to the $l$-lipschitzness of $K_{\mathbf{A}}(\mathbf{x},\mathbf{x}_j)$ and the result follows from the constraint of Problem~\eqref{eq:linear-problem}. 
\end{proof}

%The proof of Theorem \ref{theo1} is given in Appendix.
We now give a PAC generalization bound on the true loss making use of the previous robustness result. Let ${\cal R}^{\ell}=\mathbb{E}_{\mathbf{z} \sim {\cal Z}}\ell(\mathbf{A},\boldsymbol{\alpha},\mathbf{z})$ be the true loss w.r.t.\ the unknown distribution ${\cal Z}$ and ${\cal \hat{R}}^{\ell}=\frac{1}{d_l}\sum_{i=1}^{d_l}\ell(\mathbf{A},\boldsymbol{\alpha},\mathbf{z}_i)$ be the empirical loss over the training set ${\cal S}$. 
%Based on the results of \cite{DBLP:conf/colt/XuM10,DBLP:journals/ml/XuM12}, the proof requires the use of the following concentration inequality over multinomial random variables allowing one to capture statistical information coming from the different regions of the partition of $\mathcal{Z}$.

%\begin{proposition}\citep{van1996weak}\label{propo1} \\
%Let $(|N_1|,\dots,|N_M|)$ an i.i.d. multinomial random variable with
%parameters $d_l=\sum_{i=1}^M|N_i|$ and $(p(C_1),\dots,p(C_M))$. 
%By the Bretagnolle-Huber-Carol inequality we have:
%$
%\Pr\left\{\sum_{i=1}^M \left|\frac{|N_i|}{d_l}-p(C_i)\right| \geq
%  \lambda \right\}\leq 2^M \exp\left(\frac{-d_l\lambda^2}{2}\right)
%$, 
%hence with probability at least $1-\delta$,
%\begin{equation}
%\sum_{i=1}^M \left|\frac{N_i}{d_l}-p(C_i)\right|\leq \sqrt{\frac{2M\ln
%  2 + 2 \ln(1/\delta)}{d_l}}.
%\end{equation}
%\end{proposition}

%We are now able to present our generalization bound thanks to the following theorem.

\begin{theorem}\label{theo2}
Considering that problem~\eqref{eq:linear-problem} is $(M,\epsilon(\mathcal{S}))$-robust, and that $K_{\mathbf{A}}$ is $l$-lipschitz w.r.t.\ to its first argument, for any $\delta >0$ with probability at least $1-\delta$, we have:
$$|{\cal R}^{\ell}-{\cal \hat{R}}^{\ell}| \leq \frac{1}{\gamma}l\rho +B\sqrt{\frac{2M\ln2 + 2\ln(1/\delta)}{d_l}}, $$ \label{theorem2}
where $B=1+\frac{1}{\gamma}$ is an upper bound of the loss $\ell$.
\end{theorem}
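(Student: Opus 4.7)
The plan is to apply the standard robustness-to-generalization transfer argument of \cite{DBLP:journals/ml/XuM12}. Let $\{C_i\}_{i=1}^M$ be the partition of $\mathcal{Z}$ used for the robustness guarantee of Theorem~\ref{theo1}, and for the training sample $\mathcal{S}$ of size $d_l$ let $N_i=|\{j:\mathbf{z}_j\in C_i\}|$ denote the count of training points falling in cell $C_i$. The quantity I want to control is $|{\cal R}^\ell-\hat{\cal R}^\ell|$. I would introduce the intermediate quantity
\[
\tilde{R} \;=\; \sum_{i=1}^M \frac{N_i}{d_l}\,\mathbb{E}_{\mathbf{z}\sim P}\bigl[\ell(\mathbf{A},\boldsymbol{\alpha},\mathbf{z})\,\big|\,\mathbf{z}\in C_i\bigr],
\]
and split $|{\cal R}^\ell-\hat{\cal R}^\ell|\leq |{\cal R}^\ell-\tilde{R}|+|\tilde{R}-\hat{\cal R}^\ell|$.

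Next, I would bound each piece. For $|\tilde{R}-\hat{\cal R}^\ell|$, inside each cell $C_i$ every training point $\mathbf{z}_j$ and every $\mathbf{z}\in C_i$ satisfy $|\ell(\mathbf{A},\boldsymbol{\alpha},\mathbf{z})-\ell(\mathbf{A},\boldsymbol{\alpha},\mathbf{z}_j)|\leq \epsilon(\mathcal{S})=l\rho/\gamma$ by Theorem~\ref{theo1}; averaging yields $|\tilde{R}-\hat{\cal R}^\ell|\leq l\rho/\gamma$. For $|{\cal R}^\ell-\tilde{R}|$, rewrite ${\cal R}^\ell=\sum_i P(C_i)\,\mathbb{E}[\ell\mid \mathbf{z}\in C_i]$ and use $|\ell|\leq B=1+1/\gamma$ (which follows from $|K_\mathbf{A}|\leq 1$ and $\sum_j|\alpha_j|\leq 1/\gamma$) to get
\[
|{\cal R}^\ell-\tilde{R}|\;\leq\; B\sum_{i=1}^M \left|\frac{N_i}{d_l}-P(C_i)\right|.
\]

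The remaining step is a concentration inequality for the multinomial vector $(N_1,\dots,N_M)$ with parameters $d_l$ and $(P(C_1),\dots,P(C_M))$. I would invoke the Bretagnolle--Huber--Carol inequality, which states
\[
\Pr\!\left(\sum_{i=1}^M \left|\frac{N_i}{d_l}-P(C_i)\right|\geq \lambda\right)\;\leq\; 2^M\exp\!\left(-\frac{d_l\lambda^2}{2}\right).
\]
Equating the right-hand side to $\delta$ and solving for $\lambda$ yields $\lambda=\sqrt{(2M\ln 2+2\ln(1/\delta))/d_l}$; combining with the two previous bounds gives the stated inequality with probability at least $1-\delta$.

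I do not expect a hard obstacle here, since the argument is a direct instantiation of Xu--Mannor. The one point deserving care is the bound $B=1+1/\gamma$ on the hinge loss, which must be verified globally (not only on the training sample) so that it legitimately controls the across-cell term; this follows because the constraint $\sum_j|\alpha_j|\leq 1/\gamma$ together with $|K_\mathbf{A}|\leq 1$ implies $|1-\sum_j\alpha_j l(\mathbf{x})K_\mathbf{A}(\mathbf{x},\mathbf{x}_j)|\leq 1+1/\gamma$ uniformly in $\mathbf{x}$. One should also note that the $l$-Lipschitz hypothesis on $K_\mathbf{A}$ and the structure of the partition (grouping points of equal label within each $C_i$) are exactly what is needed for Theorem~\ref{theo1} to apply; no additional assumption is required for the concentration step.
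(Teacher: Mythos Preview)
Your proposal is correct and follows exactly the approach the paper indicates: the paper does not spell out the proof but simply states that it ``follows the one described in \cite{DBLP:journals/ml/XuM12} and makes use of a concentration inequality over multinomial random variables \citep{van1996weak},'' which is precisely the Bretagnolle--Huber--Carol argument you carry out, with the robustness constant $\epsilon(\mathcal{S})=l\rho/\gamma$ supplied by Theorem~\ref{theo1} and the loss bound $B=1+1/\gamma$ obtained from $|K_{\mathbf{A}}|\leq 1$ and the constraint $\sum_j|\alpha_j|\leq 1/\gamma$.
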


The proof of Theorem~\ref{theo2} follows the one described in~\cite{DBLP:journals/ml/XuM12} and makes use of a concentration inequality over multinomial random variables \citep{van1996weak}. Note that in robustness bounds, the cover radius $\rho$ can be made arbitrarily small at the expense of larger values of $M$. As $M$ appears in the second term, which decreases to 0 when $d_l$ tends to infinity, this bound provides a standard $O(1/\sqrt{d_l})$ asymptotic convergence. 

%\subsection{Robustness Analysis for Different Similarity Functions}\label{sec:ex}

The previous theorem strongly depends on the $l$-lipschitzness of the similarity function.
In the following, we focus on some particular similarities that can be used in this setting: $K^1_{\mathbf{A}}$, a similarity derived from the Mahalanobis distance, $K^2_{\mathbf{A}}$ a bilinear similarity and $K^3_{\mathbf{A}}$ an exponential similarity.
We provide the proof of the $l$-lipschitzness for $K^1_{\mathbf{A}}$. The two others follow the same ideas. % The proofs for the following functions are detailed in the Appendix.
%To get a new consistency result w.r.t.\ these similarity functions, we typically have to prove their $l$-lipschitzness.
%The first one, denoted by $K^4_{\mathbf{A}}(\mathbf{x},\mathbf{x}')=\exp \left (\frac{\mathbf{x}^T\mathbf{A}\mathbf{x}'-1}{2\sigma} \right )$, is based on the cosine similarity.
%These two similarities have the main advantage to be well suited to a $k$-NN classifier. Indeed, they approximate the indicator function $\mathbf{1}(\mathbf{x}_i \in \knn(\mathbf{x}))$ used in a $k$-NN by providing a much higher similarity to the points that are close to $\mathbf{x}$.
%It is worth noting that in both cases, parameter $\sigma$ determines in a way this close neighborhood (the smaller $\sigma$, the larger the similarity to the first neighbors) and tends to play the same role as that of $k$.

\begin{example}\label{ex2}
We define $K^1_{\mathbf{A}}(\mathbf{x},\mathbf{x}') = 1 - (\mathbf{x}-\mathbf{x}')^T\mathbf{A}(\mathbf{x}-\mathbf{x}')$, a similarity derived from the Mahalanobis distance.
$K^1_{\mathbf{A}}(\mathbf{x},\mathbf{x}')$ is $4||\mathbf{A}||_2$-lipschitz w.r.t.\ its first argument.
\end{example}

\begin{proof}[Proof]
\begin{scriptsize}
\begin{align}
\left| K^2_{\mathbf{A}}(\mathbf{x},\mathbf{x}'') - K^2_{\mathbf{A}}(\mathbf{x}',\mathbf{x}'') \right| = & \left | 1 - \left ( (\mathbf{x}-\mathbf{x}'')^T\mathbf{A}(\mathbf{x}-\mathbf{x}'') \right ) - 1 + \left ( (\mathbf{x}'-\mathbf{x}'')^T\mathbf{A}(\mathbf{x}'-\mathbf{x}'') \right ) \right | \nonumber \\
= & \left|(\mathbf{x}'-\mathbf{x}'')^T\mathbf{A}(\mathbf{x}'-\mathbf{x}'') - (\mathbf{x}'-\mathbf{x}'')^T\mathbf{A}(\mathbf{x}-\mathbf{x}'') \right. + \left. (\mathbf{x}'-\mathbf{x}'')^T\mathbf{A}(\mathbf{x}-\mathbf{x}'') - (\mathbf{x}-\mathbf{x}'')^T\mathbf{A}(\mathbf{x}-\mathbf{x}'') \right|\nonumber \\
= &  \left |(\mathbf{x}'-\mathbf{x}'')^T\mathbf{A}(\mathbf{x}'-\mathbf{x}) + (\mathbf{x}'-\mathbf{x})^T\mathbf{A}(\mathbf{x}-\mathbf{x}'') \right |\nonumber \\
\leq & \left |(\mathbf{x}'-\mathbf{x}'')^T\mathbf{A}(\mathbf{x}'-\mathbf{x}) \right | + \left |(\mathbf{x}'-\mathbf{x})^T\mathbf{A}(\mathbf{x}-\mathbf{x}'') \right |  \nonumber \\
\leq & ||\mathbf{x}'-\mathbf{x}''||_2 \cdot ||\mathbf{A}||_2\cdot||\mathbf{x}'-\mathbf{x}||_2+||\mathbf{x}'-\mathbf{x}||_2 \cdot ||\mathbf{A}||_2\cdot||\mathbf{x}-\mathbf{x}''||_2 \label{eq:13-4}\\
%& \leq & \left (||\mathbf{x}'-\mathbf{x}''||_2 \cdot (||\mathbf{A}\mathbf{x}'||_2+||\mathbf{A}\mathbf{x}||_2)+||\mathbf{x}'-\mathbf{x}||_2 \cdot (||\mathbf{A}\mathbf{x}||_2+||\mathbf{A}\mathbf{x}''||_2) \right )\label{eq:14-4}\\
\leq & ||\mathbf{x}'-\mathbf{x}''||_2 \cdot ||\mathbf{A}||_2\cdot (||\mathbf{x}'||_2+||\mathbf{x}||_2) + ||\mathbf{x}'-\mathbf{x}||_2 \cdot ||\mathbf{A}||_2\cdot (||\mathbf{x}||_2+||\mathbf{x}''||_2) \nonumber\\
\leq & 4  \cdot ||\mathbf{A}||_2  \cdot  ||\mathbf{x}-\mathbf{x}'||. \label{eq:16-4}
\end{align}
\end{scriptsize}
Inequality~\eqref{eq:13-4} comes from the Cauchy-Schwarz inequality and some classical norm properties; Inequality~\eqref{eq:16-4} comes from the assumption that $||\mathbf{x}||_2 \leq 1$.
\end{proof}

\begin{example}\label{ex1}
Let $K^2_{\mathbf{A}}$ be the bilinear form $K^2_{\mathbf{A}}(\mathbf{x},\mathbf{x}')=\mathbf{x}^T\mathbf{A}\mathbf{x}'$.
$K^2_{\mathbf{A}}(\mathbf{x},\mathbf{x}')$ is $||\mathbf{A}||_2 $-lipschitz w.r.t.\ its first argument.
\end{example}

\begin{example}\label{ex3}
Let $K^3_{\mathbf{A}}(\mathbf{x},\mathbf{x}')=\exp \left(-\frac{{(\mathbf{x}-\mathbf{x}')^T\mathbf{A}(\mathbf{x}-\mathbf{x}')}}{2\sigma^2} \right)$.
$K^3_{\mathbf{A}}(\mathbf{x},\mathbf{x}')$ is $l$-lipschitz w.r.t.\ its first argument with $l=\frac{2||\mathbf{A}||_2 }{\sigma^2} \left (\exp\left(\frac{1}{2\sigma^2}\right)-\exp\left(\frac{-1}{2\sigma^2}\right) \right )$.
\end{example}

%Notice that $K^3_{\mathbf{A}}$ is based on the Mahalanobis distance $d_{\mathbf{A}}(\mathbf{x},\mathbf{x}')=\sqrt{(\mathbf{x}-\mathbf{x}')^T\mathbf{A}(\mathbf{x}-\mathbf{x}')}$.
%Plugging $l=\frac{2}{\sigma^2} \left (\exp\left(\frac{1}{2\sigma^2}\right)-\exp\left(\frac{-1}{2\sigma^2}\right) \right )$ in Theorem~\ref{theorem2}, we get a consistency result for problem~\eqref{eq:obj} using $K^3_{\mathbf{A}}(\mathbf{x},\mathbf{x}')$.

% \begin{example} \label{ex4}
% Let $K^4_{\mathbf{A}}(\mathbf{x},\mathbf{x}')=\exp \left (\frac{\mathbf{x}^T\mathbf{A}\mathbf{x}'-1}{2\sigma} \right )$.
% $K^4_{\mathbf{A}}(\mathbf{x},\mathbf{x}')$ is $l$-lipschitz w.r.t.\ its first argument with $l=\frac{1}{2\sigma} \left ( 1-\exp\left(\frac{-1}{2\sigma}\right) \right )$.
% \end{example}

% Note that setting $\mathbf{A}$ to the identity matrix leads to the standard Gaussian kernel $K^2_I(\mathbf{x},\mathbf{x}')=\exp \left (-\frac{||\mathbf{x}-\mathbf{x}'||^2}{2\sigma^2} \right )$.
% $K^4_{\mathbf{A}}$ is bounded by the interval $[0,1]$ because we assume that $||\mathbf{x}||_2 \leq 1$, $|\mathbf{A}_{kk}| \leq 1$ and $\mathbf{A}$ is diagonal.
% Plugging $l=\frac{1}{2\sigma} \left ( 1-\exp\left(\frac{-1}{2\sigma}\right) \right )$ in Theorem~\ref{theorem2} yields a consistency result for problem~\eqref{eq:obj} using $K^4_{\mathbf{A}}(\mathbf{x},\mathbf{x}')$.

As both $K^1_{\mathbf{A}}$ and $K^2_{\mathbf{A}}$ are linear w.r.t.\ their arguments, they have the main advantage to keep problem~\eqref{eq:linear-problem} convex.
$K^3_{\mathbf{A}}$ is also based on the Mahalanobis distance, but this time it is a non linear function, ressembling more a gaussian kernel.
Plugging $l=4||\mathbf{A}||_2$ (resp. $l=||\mathbf{A}||_2$ and $l=\frac{2||\mathbf{A}||_2}{\sigma^2} \left (\exp\left(\frac{1}{2\sigma^2}\right)-\exp\left(\frac{-1}{2\sigma^2}\right) \right )$) in Theorem~\ref{theorem2}, we obtain consistency results for problem~\eqref{eq:linear-problem} using $K^1_{\mathbf{A}}(\mathbf{x},\mathbf{x}')$ (resp. $K^2_{\mathbf{A}}(\mathbf{x},\mathbf{x}')$ and $K^3_{\mathbf{A}}(\mathbf{x},\mathbf{x}')$).
As the gap between empirical and true loss presented in Theorem~\ref{theo2} is proportional with $l$ for the $l$-lipschitzness of each similarity function, we would like to keep this parameter as small as possible.
We notice that the generalization bound is tighter for $K^1_{\mathbf{A}}$ than for $K^2_{\mathbf{A}}$.
The bound for $K^3_{\mathbf{A}}$ depends on the additional parameter $\sigma$, that adjusts the influence of the similarity value w.r.t.\ the distance to the landmarks.
The value of $l$ goes to 0 as $\sigma$ augments, so larger values of $\sigma$ are preferable in order to obtain a tight bound for the generalization error. However, note that when $\sigma$ is large, the exponential behaves almost linearly, i.e. the projection loses its non-linear power.

\section{Conclusion}\label{sec:conclusion}
In this paper, we extended the theoretical analysis of the $(\epsilon, \gamma, \tau)$-good similarity framework. Using the algorithmic robustness setting, we derived new generalization bounds for different similarity functions. It turns out that the smaller the lipschitz constant of those similarity functions, the tighter the consistency bounds. This opens the door to new lines of research in {\it metric learning}~\citep{bellet2013survey,BelletHS2015} aiming at maximizing the $(\epsilon, \gamma, \tau)$-goodness of similarity functions s.t. $||\mathbf{A}||_2$ is as small as possible (see pioneer works like \cite{Bellet2012a,BelletHS11}).

\subsubsection*{Acknowledgements} 
Funding for this project was provided by a grant from Région Rhône-Alpes.

\bibliographystyle{iclr2015}
\begin{small}
\bibliography{references}
\end{small}

\end{document}